  \pgfplotsset{compat=1.10}
\DeclareMathOperator{\prox}{prox}
\DeclareMathOperator{\tr}{tr}
\DeclareMathOperator{\diag}{diag}
\definecolor{TUGreen}{RGB}{132,184,24}
\definecolor{TUGray}{RGB}{104,104,104}
\definecolor{mygreen}{RGB}{181,221,109}
\definecolor{myyellow}{RGB}{255,237,111}
\definecolor{myseablue}{RGB}{147,213,198}
\definecolor{mylila}{RGB}{189,132,190}
\definecolor{myorange}{RGB}{246,179,98}
\definecolor{myred}{RGB}{247,129,116} 
\definecolor{myblue}{RGB}{130,176,208}
\definecolor{myminthe}{RGB}{208,235,189}
\definecolor{mypink}{RGB}{251,205,227}
\definecolor{ctrust}{RGB}{171,221,164}
\definecolor{cdens}{RGB}{43,131,186}
\definecolor{cPrimp}{RGB}{215,25,28}
\definecolor{cPan}{RGB}{253,174,97}
\newcommand{\problemtitle}[1]{\gdef\@problemtitle{#1}}
\newcommand{\probleminput}[1]{\gdef\@probleminput{#1}}
\newcommand{\problemquestion}[1]{\gdef\@problemquestion{#1}}
  \par\addvspace{.5\baselineskip}
  \par\addvspace{.5\baselineskip}
\newlength{\cellwidth}
\newlength{\cellheight}
\let\END=\eof
\newcommand{\colormatrix}[1]
{\parbox{\CMcolumns\cellwidth}{%
  \baselineskip=\cellheight
  \lineskip=0pt
  \def\one{1}%
  \def\zero{0}%
  \bgroup
    \countdef\col=1
    \col=0
    \CMParse#1\END
  \egroup
}}
\def\CMParse#1{\ifx#1\END\else
  \if#1\one\relax{\color{\onecolor}\rule{\cellwidth}{\cellheight}}\fi
  \if#1\zero\relax{\color{\zerocolor}\rule{\cellwidth}{\cellheight}}\fi
  \advance\col by 1
  \ifnum\col<\CMcolumns\relax\else
    \hfil
    \col=0
  \fi
\expandafter\CMParse\fi}
\newcommand{\setcolormatrix}[6]
{\global\cellwidth=\dimexpr #1/#3\relax
 \global\cellheight=\dimexpr #2/#4\relax
 \global\def\CMcolumns{#3}%
 \global\def\onecolor{#5}%
 \global\def\zerocolor{#6}%
}
\newcommand{\thickbar}{\mathpalette\@thickbar}
\newcommand{\@thickbar}[2]{{#1\mkern1.5mu\vbox{
  \sbox\z@{$#1\mkern-1.5mu#2\mkern-1.5mu$}%
  \sbox\tw@{$#1\overline{#2}$}%
  \dimen@=\dimexpr\ht\tw@-\ht\z@-.8\p@\relax
  \hrule\@height.8\p@ 
  \vskip\dimen@
  \box\z@}\mkern1.5mu}
}
\begin{document}

\title{\Large The Trustworthy Pal: Controlling the False Discovery Rate\\in Boolean Matrix Factorization
}
\author{Sibylle Hess\thanks{AI Group, TU Dortmund, first.last@tu-dortmund.de}\\
\and
Nico Piatkowski\addtocounter{footnote}{-1}\footnotemark\\
\and
Katharina Morik\addtocounter{footnote}{-1}\footnotemark}
\date{}

\maketitle


\fancyfoot[R]{\footnotesize{\textbf{Copyright \textcopyright\ 2018 by SIAM\\
Unauthorized reproduction of this article is prohibited}}}





\begin{abstract} \small\baselineskip=9pt
Boolean matrix factorization (BMF) is a popular and powerful technique for inferring knowledge from data. The mining result is the Boolean product of two matrices, approximating the input dataset. The Boolean  product is a disjunction of rank-1 binary matrices, each describing a feature-relation, called pattern, for a group of samples. Yet, there are no guarantees that any of the returned patterns do not actually arise from noise, i.e., are false discoveries. In this paper, we propose and discuss the usage of the false discovery rate in the unsupervised BMF setting. We prove two bounds on the probability that a found pattern is  constituted of random Bernoulli-distributed noise. Each bound exploits a specific property of the factorization which minimizes the approximation error---yielding new insights on the minimizers of Boolean matrix factorization. This leads to improved BMF algorithms by replacing heuristic rank selection techniques with a theoretically well-based approach. Our empirical demonstration shows that both bounds deliver excellent results in various practical settings. 
\end{abstract}
\section{Introduction}
Often enough in explorative data mining, the user is left alone with the result; a bunch of groupings which supposedly expresses the underlying relations in the dataset. The absence of quality guarantees is an eyesore for the painstaking data miner.
Whenever data is collected from an imperfect (noisy) channel---arising from tainted or inaccurate measurements, or transmission errors---the method of choice might be fooled by the noise, resulting in phantom patterns which actually don't exist in the data. Thus, the investigation of \emph{trustworthiness} of data mining techniques is important in practice. While some approaches for the supervised setting exist, e.g., significant pattern mining \cite{Lopez/etal/2015a}, statistical emerging pattern mining \cite{Komiyama/etal/2017a} and references therein, insights for the unsupervised case are still missing. 

It is not feasible to investigate all data mining methods at once. Boolean matrix factorization (BMF) is a popular and powerful technique for inferring knowledge from data. A factorization of a binary data matrix $D$ represents a product of two (or more) factor matrices $X$ and $Y$. When $D$ is an $m$-by-$n$ matrix, $X$ is $n$-by-$r$, and $Y$ is $m$-by-$r$. While $n$ and $m$ are given by the data, the quantity $r$---the \emph{rank} of the factorization---has to be specified. This quantity determines how many parts, called tiles, the data model comprises and its correct estimation is crucial for the quality of the result. An example decomposition of rank three is shown in Fig.~\ref{fig:decompexa}. Assuredly, this exemplary decomposition arises from raw binary image (bitmap) data. No explicit information about `color' was fed to the algorithm, the grouping of colors is the result of the factorization alone. 

\begin{figure}
\centering
\includegraphics[width=2cm]{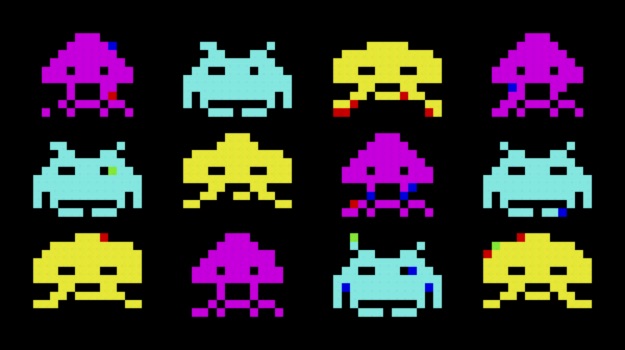}
\includegraphics[width=2cm]{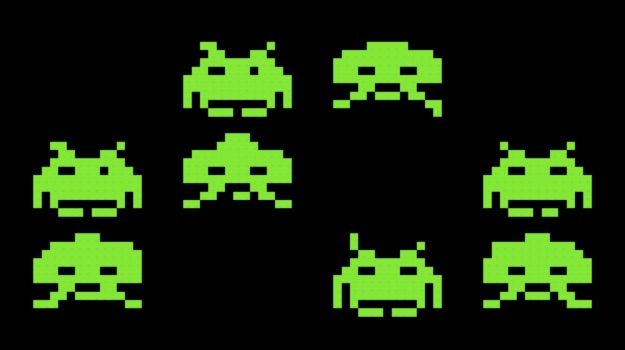}
\includegraphics[width=2cm]{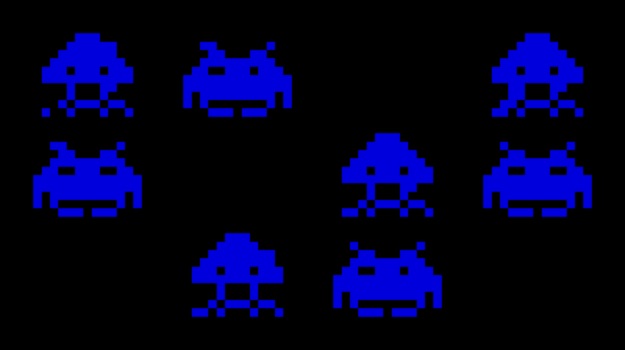}
\includegraphics[width=2cm]{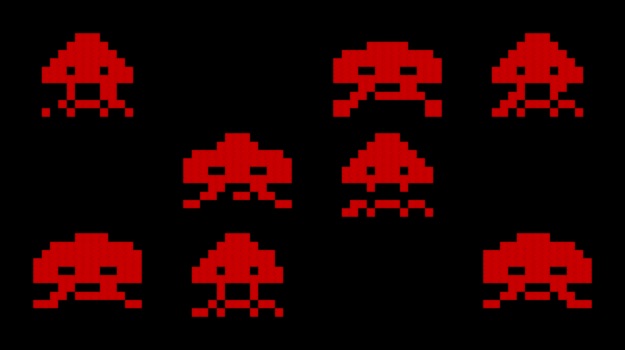}
\caption{Exemplary unsupervised decomposition of the input data (left) into three parts. In this example, the `parts' are the color channels. The data was represented via a block-wise binary encoding of pixel colors.\label{fig:decompexa}}
\end{figure}

Now, if some of the modeled tiles emerge from noise, false discoveries happen and the algorithm overfits.
We prove two bounds on the probability that a found tile is constituted of random Bernoulli-distributed noise. Both allow us to exploit specific properties of a tile, resulting in different strengths for different types of input data.
The bounds require an additional input from the user: an estimate to the noise level $p$. While this might seem to be a prohibitive burden, our experimental results show that a rough estimate suffices---the user should merely know if her data is pretty noisy or not so much.


State-of the art BMF techniques \cite{miettinen2011model,Hess/etal/2017a} employ rather complicated regularization terms to filter the structure from the noise and to correctly estimate the rank. The belief in the correctness of these algorithms is based on empirical evaluations, inter alia, experiments with synthetically added Bernoulli-distributed noise. 
Under this noise assumption, we employ our bounds to devise a new, theoretically well-founded, rank estimation strategy. Since our technique may be used as a plug-in replacement for existing heuristics, our findings improve a whole class of BMF algorithms.
Our main contributions are:
\begin{itemize}
\item the first provision of bounds on the probability that a tile with specified properties is generated from random noise;
\item the validation of required properties for factorizations which minimize the approximation error, showing that both bounds are non-trivial;
\item the exemplification of algorithmic use of the bounds for automatic rank selection and the empirical evaluation on synthetic and real-world datasets---a step towards trustworthy data mining. 
\end{itemize}
\section{Background}\label{sec:background}
We denote the input data by a binary matrix $D\in \{0,1\}^{m\times n}$. If the underlying data is not binary, we employ a problem-specific discretization technique. 

For any $m$-by-$n$ matrix $M$, 
we designate matrix norms as $\|M\|^2=\sum_{j=1}^m \sum_{i=1}^n M_{ji}^2$ for the squared Frobenius norm and $|M|=\sum_{j=1}^m \sum_{i=1}^n |M_{ji}|$ for the entrywise 1-norm. While both obey different analytic properties, their values are equivalent for binary matrices in the sense that $|M|=\|M\|^2$. The Frobenius inner product is defined for matrices $X$ and $Y$ as $\langle X,Y\rangle_F=\tr(X^\top Y)$. For nonnegative matrices $X$ and $Y$, the Frobenius inner product equates the entrywise 1-norm of the (component-wise) Hadamard product (denoted by $\circ$), i.e., $\langle X,Y\rangle_F=|X\circ Y|$. 

We write $\mathbf{1}$ to represent a matrix having all elements equal to 1. The dimension of such a matrix is always inferable from the context. $\thickbar{M}=\mathbf{1}-M$ is the complement of a binary matrix $M$. Here, the dimension of the matrix $\mathbf{1}$ is the dimension of $M$.

Given two binary matrices $X\in\{0,1\}^{n\times r}$ and $Y\in\{0,1\}^{m\times r}$, we denote their Boolean matrix product by $Y\odot X^\top$, where 
\begin{align*}
(Y\odot X^\top)_{ji} = \bigvee_{s=1}^rY_{js}X_{is}
\end{align*}
and $a \vee b$ denotes the logical \texttt{or} of truth-values $a$ and $b$.

\subsection{Boolean Matrix Factorization}
We assume that the data $D\in\{0,1\}^{m\times n}$ originates from a matrix product and some noise, i.e.,
\begin{align}\label{eq:BMF0}
D=(Y^*\odot {X^*}^\top \vee N_+) \wedge (\thickbar{Y^*\odot {X^*}^\top} \vee \thickbar{N_-})\;,
\end{align}
where $X^*\in\{0,1\}^{n\times r}$ and $Y^*\in\{0,1\}^{m\times r}$ are the underlying factor matrices  and $N_+,N_-\in\{0,1\}^{m\times n}$ are the binary positive and negative noise matrices. 

A Boolean product $Y\odot X^\top$ is the disjunction of $r$ matrices; each matrix is defined by the outer product $Y_{\cdot s}X_{\cdot s}^\top $ of the $s$-th column vectors. We refer to a pair of such column vectors $(X_{\cdot s},Y_{\cdot s})$ as a \emph{tile}. The left vector of a tile indicates a pattern, and the right vector tells us where this pattern appears in the data. Correspondingly, $X$ is called the pattern and $Y$ the usage matrix.

Given the data matrix $D$, we wish to extract the original relations denoted by $X^*$ and $Y^*$, which is unfortunately impossible for practical applications. Hence, surrogate tasks are considered in which we try to obtain good estimates of $X^*$ and $Y^*$. A straightforward way is to derive the factorization which minimizes the residual sum of absolute values $L(X,Y)=|D-Y\odot X^\top|$. Since this is still a hard problem unless $\mathbf{P} = \mathbf{NP}$~\cite{discreteBasisProb}, various heuristic algorithms have been developed to find approximate solutions. A rather recent branch of research allows us to employ numerical optimization methods instead~\cite{Hess/etal/2017a}. The corresponding framework is called \textsc{PAL-Tiling}, whose core is the minimization of a smooth version of the approximation error, namely $F(X,Y)=\frac{1}{2}\|D-YX^\top\|^2$, subject to binary constraints.
The objective $F$ is well-known from nonnegative and binary matrix factorization~\cite{wang2013nmfReview,zhangApplication}. 
We will revisit this surrogate in Sec.~\ref{sec:alg}, when we explain how practical algorithms can benefit from our bounds.

\section{Bounding False Discoveries}\label{sec:ctrlFDRBMF}
The first step towards trustworthy pattern mining is a measure of trustworthiness. 
The False Discovery Rate (FDR)~\cite{benjamini1995fdr} is a simple yet powerful way to express the probability that something goes wrong.
\begin{Definition}[FDR]
Given a finite set $\mathcal{H}$ of null hypotheses from which $r$ are rejected. Let $v$ denote the number of erroneously rejected null hypotheses. We say that \emph{the FDR is controlled at level $q$} if
\begin{align*}
\mathbb{E}\left(\frac{v}{r}\right)\leq q.
\end{align*}
\end{Definition}
In our setting, a null hypothesis states that the columns and rows indicated by the vector $x$, respectively $y$, of a tile $(x,y)$ do not stand in any underlying relation. In other words, the hypothesis $H_s^0$ is true when there is no overlap between the underlying model $Y^*{X^*}^\top$ and the outer product of the $s$-th tile $Y_{\cdot s}X_{\cdot s}^\top$. This definition of a null hypothesis might seem too restrictive for some applications. Therefore, we discuss possible relaxations of this requirement in Sec.~\ref{sec:reject}. 
Bearing this in mind, we see that a BMF of rank $r$ corresponds to a joint rejection of $r$ null hypotheses $\{H_1^0,H_2^0,\dots,H_r^0\}$. 
Thus, if the correct rank is $r^*$, any rank $r>r^*$ factorization is likely to state some erroneous rejections of null hypotheses, a.k.a. false discoveries. 

Now, given factor matrices $X\in \{0,1\}^{n\times r}$ and $Y\in\{0,1\}^{m\times r}$, we define a random variable $Z_s$ with domain $\{0,1\}$, which takes the value $1$ if and only if the null hypothesis $H_s^0$ is not to be rejected, i.e., the outer product $Y_{\cdot s}X_{\cdot s}^\top $ covers only (or mostly -- depending on the definition) noise. The FDR of a BMF is therefore computed via
\begin{align}
\mathbb{E}\left(\frac{v}{r}\right) = \frac{1}{r}\sum_{s=1}^r\mathbb{P}(Z_s=1)\;.\label{eq:FDRZ}
\end{align}
\subsection{Tiles in Bernoulli Matrices}
We aim at assessing the probability $\mathbb{P}(Z_s=1)$. Therefore, we need to employ an independence assumption on the noise. 
\begin{Definition}[Bernoulli matrix]
Let $B$ be an $m\times n$ binary matrix. If the entries of $B$ are independent Bernoulli variables, which take the value $1$ with probability $p$ and zero otherwise, i.e., 
\[\mathbb{P}(B_{ji}=1)=p,\ \mathbb{P}(B_{ji}=0)=1-p\;,\]
then $B$ is a \emph{Bernoulli matrix} with parameter $p$.
\end{Definition}
In what follows, we assume that the positive noise matrix $N_+$ is a Bernoulli matrix. If a tile $(X_{\cdot s},Y_{\cdot s})$ approximates noise, the outer product $Y_{\cdot s}X_{\cdot s}^\top$ and the positive noise matrix have some entries in common. The \emph{overlap} is computed by the sum of common $1$ entries
\[|Y_{\cdot s}X_{\cdot s}^\top\circ N_+|=Y_{\cdot s}^\top N_+X_{\cdot s}\;.\]
This quantity is used to determine the density of a tile in the positive noise matrix.
\begin{Definition}[$\delta$-dense]
Let $M$ be an $m\times n$ binary matrix and $\delta\in[0,1]$. We say a tile $(X_{\cdot s},Y_{\cdot s})$ is \emph{$\delta$-dense in $M$}  if
\[Y_{\cdot s}^\top M X_{\cdot s} \geq \delta |X_{\cdot s}||Y_{\cdot s}|\;.\]
\end{Definition}
We will see in the following section, that a Boolean matrix product, which approximates the data matrix well, covers a high proportion of ones in $D$. Therefore, the tiles returned by a BMF are expected to be dense in $D$ ($\delta>0.5$). We explore by the following theorem the probability with that a $\delta$-dense tile of given minimal size exists in a Bernoulli matrix. This gives us an upper bound on the probability $\mathbb{P}(Z_s=1)$ from Eq.~\eqref{eq:FDRZ}, which in turn allows us to bound the FDR.
\begin{theorem}\label{thm:densProb}
Suppose $B$ is an $m\times n$ Bernoulli matrix with parameter $p$, $\delta$ is in $[0,1]$, $1\leq a\leq n$, and $1\leq b\leq m$. 
The probability that a $\delta$-dense tile of size $|x|\geq a$ and $|y|\geq b$ exists is no larger than
\begin{align}\label{eq:densBound}
\binom{n}{a}\binom{m}{b}\exp(-2ab(\delta-p)^2)\;.
\end{align}
\end{theorem}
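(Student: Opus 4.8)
The plan is to combine a deterministic size-reduction argument with a union bound and a concentration inequality. The key observation is that it suffices to control tiles of exact size $a\times b$, even though the statement concerns all tiles with $|x|\geq a$ and $|y|\geq b$.

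First I would prove a monotonicity lemma: for any fixed realization of $B$, the existence of a $\delta$-dense tile with $|x|\geq a$ and $|y|\geq b$ implies the existence of a $\delta$-dense tile of exact size $|x|=a$, $|y|=b$. The argument is an averaging/greedy one. Fixing $y$ and writing the overlap as $y^\top B x = \sum_{i:\,x_i=1} y^\top B_{\cdot i}$, the $\delta$-density condition says the average of the per-column contributions $y^\top B_{\cdot i}$ over the support of $x$ is at least $\delta|y|$. Keeping only the $a$ columns with the largest contributions leaves this average (hence the density) at least $\delta$, yielding a $\delta$-dense tile with exactly $a$ columns; applying the symmetric argument to the rows reduces $|y|$ to exactly $b$. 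Thus the event that a $\delta$-dense tile of size $\geq a\times\geq b$ exists is contained in the event that one of size exactly $a\times b$ exists, so it is enough to bound the latter probability.

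Next I would apply the union bound over all tiles of exact size $a\times b$. There are $\binom{n}{a}$ choices for the support of $x$ and $\binom{m}{b}$ choices for the support of $y$, giving the prefactor $\binom{n}{a}\binom{m}{b}$. For one fixed such tile, the overlap $y^\top B x$ is a sum of exactly $ab$ independent Bernoulli$(p)$ entries of $B$, i.e.\ a Binomial$(ab,p)$ variable, and the tile is $\delta$-dense precisely when the empirical mean of these $ab$ entries is at least $\delta$. In the relevant regime $\delta\geq p$ (a dense tile has $\delta>p$), Hoeffding's inequality for bounded independent variables gives $\mathbb{P}(y^\top B x\geq\delta ab)\leq\exp(-2ab(\delta-p)^2)$. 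Multiplying this per-tile estimate by the number of tiles yields the claimed bound.

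The main obstacle I anticipate is the reduction lemma, which is what makes the clean binomial prefactor possible: one must verify carefully that discarding the columns and rows of below-average density cannot push the density below $\delta$, so that the reduction to the finite, enumerable family of exact-size tiles is valid for every realization of $B$. Once this is in place, the union bound and the Hoeffding estimate are routine.
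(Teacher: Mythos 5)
Your proposal is correct and takes essentially the same route as the paper's own proof: the same reduction to exact-size tiles (the paper phrases it as iteratively removing the sparsest column/row of $y^\top Bx$, which is the same averaging argument as your keeping the top-$a$ columns and top-$b$ rows), followed by Hoeffding's inequality for a fixed tile and a union bound over the $\binom{n}{a}\binom{m}{b}$ possible supports. Your explicit note that the Hoeffding step needs the regime $\delta\geq p$ is a detail the paper leaves implicit, but it does not alter the argument.
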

\begin{proof}
If a $\delta$-dense tile $(x,y)$ exists in $B$, having the size  $(|x|,|y|)\geq (a,b)$, then we can construct a $\delta$-dense sub-tile of exact size $(a,b)$. This follows from the observation that removing the sparsest column/row in $y^\top Bx$ from the tile does not decrease the density. Thus, the probability that a $\delta$-dense tile of size at least $(a,b)$ exists is no larger than the probability that a tile of size $(a,b)$ exists.  

Now, let $(x,y)$ be such a tile with $|x|= a$ and $|y|= b$. The probability that $(x,y)$ is $\delta$-dense in $B$ is equal to
\begin{align*}
\mathbb{P}\left(\frac{y^\top Bx}{|x||y|}\geq \delta\right)
&=\mathbb{P}\left(\left(\frac{1}{ab}\sum_{i,j}x_iy_jB_{ji}\right)-p\geq \delta-p\right)\\
&\leq \exp(-2ab(\delta-p)^2),
\end{align*}
where the inequality follows from Hoeffding's inequality.
An application of the union bound over all possible combinations to place $a$ ones in $x$ and $b$ ones in $y$ yields the statement of the theorem.\hfill$\blacksquare$
\end{proof}
The proof of Theorem~\ref{thm:densProb} indicates that the tightness of Bound~\eqref{eq:densBound} might suffer from the extensive use of the union bound. This originates from the numerous possibilities to select a set of columns and rows of given cardinality. If we expect that rows and columns which are selected by a tile have proportionately many ones in common, we bypass the requirement to take all possible column and row selections into account. To this end, given an $(m\times n)$-dimensional matrix $B$, we assess the value of the function 
\[
\eta(B)= \max_{1\leq i\neq k\leq n} \langle B_{\cdot i},B_{\cdot k}\rangle\;.
\] 
\begin{theorem}
Let $B$ be an $m\times n$ Bernoulli matrix with parameter $p$ and let $\mu>p^2$. The function value of $\eta$ satisfies
$
\eta\left(({1}/\sqrt{m})B\right) \geq \mu
$
with probability no larger than
\begin{align}\label{eq:cohBound}  
\frac{n(n-1)}{2}\exp\left( -\frac{3}{2}m\frac{(\mu-p^2)^2}{2p^2+\mu}\right). 
\end{align}
\end{theorem}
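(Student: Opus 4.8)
The plan is to reduce the statement to a tail bound for a single pair of columns and then apply a union bound. First I would unpack the scaling: since $\eta\bigl((1/\sqrt m)B\bigr)=\tfrac1m\max_{1\le i\neq k\le n}\langle B_{\cdot i},B_{\cdot k}\rangle$ and the inner product is symmetric in $i$ and $k$, the event $\eta\bigl((1/\sqrt m)B\bigr)\ge\mu$ is exactly the event that $\langle B_{\cdot i},B_{\cdot k}\rangle=\sum_{j=1}^m B_{ji}B_{jk}\ge m\mu$ for at least one of the $\binom n2=n(n-1)/2$ unordered pairs $\{i,k\}$.

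Next I would analyze one fixed pair $i\neq k$. Because the entries of $B$ are independent and each equals $1$ with probability $p$, the products $B_{1i}B_{1k},\dots,B_{mi}B_{mk}$ are independent $\{0,1\}$ variables, each taking the value $1$ with probability $p^2$. Hence $S_{ik}:=\sum_{j}B_{ji}B_{jk}$ is a sum of $m$ i.i.d.\ Bernoulli$(p^2)$ variables with mean $\mathbb{E}[S_{ik}]=mp^2$, and I want to bound $\mathbb{P}(S_{ik}\ge m\mu)$, a deviation of $a:=m(\mu-p^2)>0$ above the mean (here $\mu>p^2$ is used).

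The core step is to invoke a Bernstein-type tail inequality for independent $[0,1]$-valued summands in the variance-aware form that bounds the summand variance by its mean: for such a sum with total mean $\mu_0$ one has $\mathbb{P}(S\ge\mu_0+a)\le\exp\bigl(-a^2/(2(\mu_0+a/3))\bigr)$. Substituting $\mu_0=mp^2$ and $a=m(\mu-p^2)$, together with the simplification $\mu_0+a/3=mp^2+m(\mu-p^2)/3=m(2p^2+\mu)/3$, yields $\mathbb{P}(S_{ik}\ge m\mu)\le\exp\bigl(-\tfrac32 m(\mu-p^2)^2/(2p^2+\mu)\bigr)$, which is precisely the per-pair factor appearing in~\eqref{eq:cohBound}. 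Finally I would apply the union bound over all $n(n-1)/2$ pairs to obtain the stated bound.

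The main obstacle is choosing the right concentration inequality. A naive application of Hoeffding's inequality (as used for Theorem~\ref{thm:densProb}) would only give an exponent of order $-2m(\mu-p^2)^2$, which does not reproduce the mean-sensitive denominator $2p^2+\mu$; the sharper Bernstein form is needed, and the key quantitative input is that a $\{0,1\}$-valued summand has variance $p^2(1-p^2)\le p^2$ equal to at most its mean. I would also verify that the symmetry of the inner product makes the relevant index set the $\binom n2$ unordered pairs rather than all ordered pairs, which is exactly what produces the combinatorial prefactor $n(n-1)/2$ and avoids the much larger union bound incurred in Theorem~\ref{thm:densProb}.
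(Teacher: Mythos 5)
Your proposal is correct and follows essentially the same route as the paper: both fix a pair of distinct columns, observe that the products $B_{ji}B_{jk}$ are independent Bernoulli variables with mean $p^2$ and variance $p^2(1-p^2)\le p^2$, apply Bernstein's inequality to get the per-pair factor $\exp\bigl(-\tfrac32 m(\mu-p^2)^2/(2p^2+\mu)\bigr)$, and finish with a union bound over the $n(n-1)/2$ unordered column pairs. Your write-up is in fact slightly more explicit than the paper's (spelling out the $1/\sqrt m$ scaling, the substitution $\mu_0+a/3=m(2p^2+\mu)/3$, and why Hoeffding would not suffice), but there is no substantive difference in the argument.
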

\begin{proof}
Let $B$ be as described above and $1\leq i\neq k\leq n$. The variance of the random variable $B_{ji}B_{jk}$ is
\[\mathbb{E}\left[\left(B_{ji}B_{jk}-p^2\right)^2\right]=p^2(1-p^2)\;.\]
Since the variables $B_{ji}B_{jk}$ are independent for $j\in\{1,\ldots,m\}$, the Bernstein inequality yields 
\begin{align*}
\mathbb{P}(\langle B_{\cdot i},B_{\cdot k}\rangle\geq m\mu) & \leq \exp\left( -\frac{3}{2}m\frac{(\mu-p^2)^2}{2p^2+\mu}\right)\;,
\end{align*}
where we made use of the relations $\mathbb{E}[B_{ji}B_{jk}]=p^2$ and $1-p^2\leq 1$.
The union bound over all possible pairs of distinct rows ($i\neq k$) yields the final result.\ \hfill$\blacksquare$
\end{proof}

If the columns of matrix $B$ are normalized, then the function $\eta(B)$ returns the \emph{coherence} of $B$. The coherence measures how close the column vectors are to an orthogonal system, an extensively studied property in the field of compressed sensing~\cite{foucart2013CS}. If all columns of a matrix are orthogonal to each other, then the coherence is zero. The bound in Eq.~\eqref{eq:cohBound} also implies a bound on the coherence of the matrix $B$. Thus, we refer to Bound~\eqref{eq:cohBound} as the \emph{coherence bound} and to Bound~\eqref{eq:densBound} as the \emph{density bound}. For any given tile, we can now derive two upper bounds on the quantity $\mathbb{P}(Z_s=1)$ from Eq.~\eqref{eq:FDRZ}, and thus control the FDR.
\subsection{Rejecting the Rejection of Null Hypotheses}\label{sec:reject}
How does the density and the coherence bound now help assessing the  probability $\mathbb{P}(Z_s=1)$ from Eq.~\eqref{eq:FDRZ}? Let us reconsider the universal formulation of a null hypothesis, which poses that a tile does not reflect actual relations given by the model $Y^*{X^*}^\top$. First, we relax this definition by counting those tiles which cover only a fraction of the underlying model among the false discoveries as well. Given factor matrices $X$ and $Y$ and and a fraction parameter $t\in[0,1]$, we define the null hypothesis $H_s^0(t)$ to be true if the overlap between the $s$-th tile and the model is smaller than 
\begin{align}\label{eq:nullHyp}
\frac{|Y_{\cdot s}X_{\cdot s}^\top\circ Y^*{X^*}^\top|}{|Y_{\cdot s}X_{\cdot s}^\top|}\leq t.
\end{align}
\begin{corollary}\label{thm:densZ}
Let $D$ be composed as denoted in Eq.~\eqref{eq:BMF0} and let $N_+$ be a Bernoulli matrix with parameter $p$. Given $X\in\{0,1\}^{n\times r}$ and $Y\in\{0,1\}^{m\times r}$ with 
\[Y_{\cdot s}^\top DX_{\cdot s}\geq \delta_s|Y_{\cdot s}||X_{\cdot s}|,\] 
then for $\rho \coloneqq \max\{\delta_s-t-p,0\}$, the probability that  Eq.~\eqref{eq:nullHyp} holds is given by
\begin{align*}
\mathbb{P}(Z_s=1)\leq \binom{n}{|X_{\cdot s}|}\binom{m}{|Y_{\cdot s}|}\exp(-2|X_{\cdot s}||Y_{\cdot s}|\rho^2)
\end{align*}
\end{corollary}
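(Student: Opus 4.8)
The plan is to reduce the statement to the density bound of Theorem~\ref{thm:densProb} by showing that, whenever the null hypothesis $H_s^0(t)$ holds, the tile $(X_{\cdot s},Y_{\cdot s})$ must be highly dense not in $D$ but in the positive noise matrix $N_+$. Writing $x=X_{\cdot s}$, $y=Y_{\cdot s}$ and abbreviating the model by $M=Y^*\odot{X^*}^\top$, the first and central step is to unfold the entrywise composition of $D$ in Eq.~\eqref{eq:BMF0}. A case distinction on $M_{ji}\in\{0,1\}$ shows that $D_{ji}=(N_+)_{ji}$ wherever $M_{ji}=0$, and $D_{ji}=1-(N_-)_{ji}\le 1$ wherever $M_{ji}=1$. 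In either case $D_{ji}\le (N_+)_{ji}+M_{ji}$, so that $D\le N_++M$ holds entrywise. I expect this decomposition to be the main obstacle, both because it requires carefully evaluating the two Boolean clauses and because it is the point at which one must verify that the negative noise $N_-$ plays no adverse role: it only ever removes ones from positions already covered by the model, so no distributional assumption on $N_-$ is needed and it can simply be discarded in the upper bound.

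Given the entrywise inequality, the overlap of the tile with the data splits as
\begin{align*}
y^\top D x \le y^\top N_+ x + y^\top M x = y^\top N_+ x + |Y_{\cdot s}X_{\cdot s}^\top\circ M|,
\end{align*}
where I use that $M$ is binary, so that $y^\top M x=\langle Y_{\cdot s}X_{\cdot s}^\top,M\rangle_F=|Y_{\cdot s}X_{\cdot s}^\top\circ M|$. Next I would combine the given density property $y^\top Dx\ge\delta_s|x||y|$ with the assumption that $H_s^0(t)$ holds, i.e.\ $|Y_{\cdot s}X_{\cdot s}^\top\circ M|\le t\,|Y_{\cdot s}X_{\cdot s}^\top|=t\,|x||y|$ by Eq.~\eqref{eq:nullHyp}. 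Rearranging gives
\begin{align*}
y^\top N_+ x \ge (\delta_s-t)\,|x||y|,
\end{align*}
which says precisely that the tile $(x,y)$ is $(\delta_s-t)$-dense in the Bernoulli matrix $N_+$.

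Finally I would invoke Theorem~\ref{thm:densProb} with $\delta=\delta_s-t$ and $a=|X_{\cdot s}|$, $b=|Y_{\cdot s}|$. Since the factorization is computed from $D$ and hence depends on the realization of $N_+$, the found tile may a priori be any tile of its size; the event $Z_s=1$ therefore entails that \emph{some} $(\delta_s-t)$-dense tile of size $(|X_{\cdot s}|,|Y_{\cdot s}|)$ exists in $N_+$, so the existence-form bound of Theorem~\ref{thm:densProb} applies directly and supplies the factor $\binom{n}{|X_{\cdot s}|}\binom{m}{|Y_{\cdot s}|}$ together with the exponent $-2|X_{\cdot s}||Y_{\cdot s}|(\delta_s-t-p)^2$. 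The one remaining subtlety is the sign of $\delta_s-t-p$: the Hoeffding step underlying Theorem~\ref{thm:densProb} requires a nonnegative deviation, so when $\delta_s-t-p<0$ the argument no longer yields a meaningful exponential. Replacing $\delta_s-t-p$ by $\rho=\max\{\delta_s-t-p,0\}$ keeps the statement valid in every case---trivially so when $\rho=0$, since then the right-hand side is at least one---which is exactly the form claimed in the corollary.
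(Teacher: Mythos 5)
Your proposal is correct and follows essentially the same route as the paper's own proof: both reduce the claim to Theorem~\ref{thm:densProb} by showing, via the entrywise decomposition $D\le Y^*{X^*}^\top+N_+$ (which the paper compresses into a ``triangle inequality'' step) together with Eq.~\eqref{eq:nullHyp}, that the tile is $(\delta_s-t)$-dense in $N_+$. Your version is in fact somewhat more careful than the paper's, since you make the Boolean case analysis explicit, justify why the existence form of the bound is the right one to invoke, and handle the degenerate case $\rho=0$ explicitly.
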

\begin{proof}
We apply the triangle inequality to the decomposition of $D$ given by Eq.~\eqref{eq:BMF0}, yielding
\[|Y_{\cdot s}^\top DX_{\cdot s}|\leq |Y_{\cdot s}X_{\cdot s}^\top\circ Y^*{X^*}^\top|+|Y_{\cdot s}X_{\cdot s}^\top \circ N_+|.\]
Dividing by $|Y_{\cdot s}||X_{\cdot s}|$ and applying Eq.~\eqref{eq:nullHyp} yields that $(X_{\cdot s},Y_{\cdot s})$ is $\delta_s-t$-dense in $N_+$.
The probability for this event is bounded by Theorem~\ref{thm:densProb}.\ \hfill$\blacksquare$
\end{proof}
Similar considerations lead to a false discovery bound based on coherence. Therefore, we define the null hypothesis to hold if 
\begin{align}\label{eq:nullHypCoh}
\eta(Y_{\cdot s}X_{\cdot s}\circ Y^*{X^*}^\top)\leq t.
\end{align}
This restriction affects the tile-wise overlap between the underlying and the computed model more than the definition based on density does. As such, Eq.~\eqref{eq:nullHypCoh} implies that each column of the outer product $Y_{\cdot s}{X_{\cdot s}}^\top$ covers at most $t$ rows of each tile $(X^*_{\cdot \tilde{s}},Y^*_{\cdot \tilde{s}})$ of the underlying model. The probability of a false discovery according to this definition of a null hypothesis is bounded by the following corollary. 
\begin{corollary}\label{thm:cohZ}
Let $D$ be composed as denoted in Eq.~\eqref{eq:BMF0} and let $N_+$ be a Bernoulli matrix with parameter $p$. Given $X\in\{0,1\}^{n\times r}$ and $Y\in\{0,1\}^{m\times r}$ such that 
\[\eta(Y_{\cdot s}X_{\cdot s}^\top \circ D)\geq \mu_s m,\] 
then for $\tilde{\mu} \coloneqq \max\{\mu_s-\nicefrac{t}{m},p^2\}$, the probability that Eq.~\eqref{eq:nullHypCoh} holds is bounded by
\begin{align*}
\mathbb{P}(Z_s=1)\leq \exp\left(-\frac{3}{2}m\frac{(\tilde{\mu}-p^2)^2}{2p^2+\tilde{\mu}}\right)
\end{align*}
\end{corollary}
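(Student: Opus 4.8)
The plan is to mirror the argument of Corollary~\ref{thm:densZ}, replacing the additive $1$-norm by the coherence functional $\eta$ and the density bound of Theorem~\ref{thm:densProb} by the coherence bound of Eq.~\eqref{eq:cohBound}. Write $T=Y_{\cdot s}X_{\cdot s}^\top$ for the outer product of the $s$-th tile. From the composition of $D$ in Eq.~\eqref{eq:BMF0} the positive noise only sets ones off the support of $Y^*{X^*}^\top$, while the negative noise only removes ones, so $D\leq (Y^*{X^*}^\top)\vee N_+$ entrywise and hence $T\circ D\leq (T\circ Y^*{X^*}^\top)\vee(T\circ N_+)$. The first step is therefore to turn this entrywise domination into a \emph{subadditivity estimate} for $\eta$, namely
\[\eta(T\circ D)\leq \eta(T\circ Y^*{X^*}^\top)+\eta(T\circ N_+),\]
read off at the column pair $(i^\ast,k^\ast)$ attaining the maximum in $\eta(T\circ D)$.

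Granting this estimate, the remainder is bookkeeping parallel to Corollary~\ref{thm:densZ}. On the event $Z_s=1$ the coherence null hypothesis~\eqref{eq:nullHypCoh} gives $\eta(T\circ Y^*{X^*}^\top)\leq t$, and combined with the standing assumption $\eta(T\circ D)\geq\mu_s m$ this forces $\eta(T\circ N_+)\geq \mu_s m-t=m(\mu_s-\nicefrac{t}{m})$. Since the Bernstein estimate behind the coherence bound requires a threshold above $p^2$ and is vacuous at $p^2$, replacing $\mu_s-\nicefrac{t}{m}$ by $\tilde\mu=\max\{\mu_s-\nicefrac{t}{m},p^2\}$ keeps us in the admissible regime and never weakens the resulting probability bound; thus $Z_s=1$ entails $\eta(T\circ N_+)\geq m\tilde\mu$.

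Finally I would reduce this last event to the setting of the coherence bound. Because $(T\circ N_+)_{ji}=Y_{js}X_{is}(N_+)_{ji}$, every column outside the pattern support vanishes, and for the two columns $i^\ast,k^\ast$ with $X_{i^\ast s}=X_{k^\ast s}=1$ the inner product $\langle(T\circ N_+)_{\cdot i^\ast},(T\circ N_+)_{\cdot k^\ast}\rangle$ is a sum over the tile's rows of independent products $(N_+)_{ji^\ast}(N_+)_{jk^\ast}$, each of mean $p^2$ and variance $p^2(1-p^2)$. Applying the per-pair Bernstein estimate established inside the proof of the coherence bound—before its union bound over pairs—to the threshold $m\tilde\mu$ produces the factor $\exp\left(-\frac{3}{2}m\frac{(\tilde{\mu}-p^2)^2}{2p^2+\tilde{\mu}}\right)$ of the statement. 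The combinatorial prefactor $\nicefrac{n(n-1)}{2}$ of Eq.~\eqref{eq:cohBound} does not reappear precisely because we have already fixed the maximizing pair $(i^\ast,k^\ast)$ rather than taking a union over all column pairs.

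The main obstacle is the subadditivity estimate of the first step. For the $1$-norm in Corollary~\ref{thm:densZ} subadditivity is immediate, but $\eta$ is a bilinear (degree-two) quantity, and expanding $(T\circ Y^*{X^*}^\top)\vee(T\circ N_+)$ at a fixed column pair produces \emph{mixed} model--noise terms $\sum_j Y_{js}(Y^*{X^*}^\top)_{ji^\ast}(N_+)_{jk^\ast}$ that are captured by neither $\eta(T\circ Y^*{X^*}^\top)$ nor $\eta(T\circ N_+)$. I expect the resolution to rest on the per-column reading of the null hypothesis remarked after Eq.~\eqref{eq:nullHypCoh}—each tile column meets each model tile in at most $t$ rows—which caps the number of rows on which a mixed term is nonzero and lets those cross contributions be absorbed into the slack between $\mu_s m$ and $t$. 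Making this absorption quantitatively tight, so that the $p^2$-regime of the coherence bound is preserved rather than degrading to the $p$-regime of single-column sums, is the delicate part of the argument.
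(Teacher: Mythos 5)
Your proposal follows the same route as the paper's own proof: the paper's argument is exactly your first two steps---the subadditivity estimate $\eta(Y_{\cdot s}X_{\cdot s}^\top\circ D)\leq \eta(Y_{\cdot s}X_{\cdot s}^\top\circ Y^*{X^*}^\top)+\eta(Y_{\cdot s}X_{\cdot s}^\top\circ N_+)$ (asserted there in one line, ``computing a maximum''), followed by the deduction $\eta(N_+)\geq\mu_s m-t$ and an appeal to the Bernstein-type coherence theorem behind Eq.~\eqref{eq:cohBound}. However, your proposal is not a complete proof, and the step you flag as the ``main obstacle'' is a genuine gap: the subadditivity inequality is not merely delicate but false in general. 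Take the tile $T=Y_{\cdot s}X_{\cdot s}^\top=\mathbf{1}$ of size $2\times 2$, with $T\circ Y^*{X^*}^\top=\left(\begin{smallmatrix}1&0\\1&0\end{smallmatrix}\right)$ and $T\circ N_+=\left(\begin{smallmatrix}0&1\\0&1\end{smallmatrix}\right)$: both terms have $\eta=0$, so the null hypothesis~\eqref{eq:nullHypCoh} holds with $t=0$, yet $\eta(T\circ D)=2$ while $\eta(T\circ N_+)=0$. Thus the implication ``null hypothesis and $\eta(T\circ D)\geq\mu_s m$ entail $\eta(T\circ N_+)\geq\mu_s m-t$'' breaks down, and the culprit is exactly the mixed model--noise inner products you identified. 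Note these cross terms have expectation of order $p\,|Y_{\cdot s}|$ (noise hitting one column across the rows where the model covers another column), so it is far from clear they can be absorbed into the slack $\mu_s m-t$ as you hope; your sketched repair via the per-column reading of~\eqref{eq:nullHypCoh} is left entirely speculative. In short, the central step is missing from your proposal---and, for what it is worth, it is equally missing from the paper, which asserts the inequality without justification.

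Your second difficulty, the prefactor, is also handled unsoundly. You argue that $\nicefrac{n(n-1)}{2}$ disappears because the maximizing pair $(i^\ast,k^\ast)$ is ``already fixed,'' but that pair is a function of $N_+$ and of the data-dependent tile, hence random; applying a fixed-pair Bernstein estimate to a pair selected by the data is precisely the selection fallacy that union bounds exist to prevent, and it is the same selection effect that forces the $\binom{n}{a}\binom{m}{b}$ factor in Theorem~\ref{thm:densProb} and Corollary~\ref{thm:densZ}. A sound invocation of the coherence theorem produces the exponential term multiplied by a combinatorial factor---at best $\binom{|X_{\cdot s}|}{2}$, if one restricts the union to column pairs inside the tile's support. (The paper is internally inconsistent on this point: its proof cites the theorem that carries the prefactor while its stated bound omits it. Your argument does not resolve that discrepancy; it papers over it.)
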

\begin{proof} 
From the composition of $D$ as denoted in Eq.~\eqref{eq:BMF0} and the definition of $\eta$, computing a maximum, follows that 
\[\eta(Y_{\cdot s}X_{\cdot s}^\top \circ D)\leq \eta(Y_{\cdot s}X_{\cdot s}^\top \circ Y^*_{\cdot s}{X^*_{\cdot s}}^\top)+\eta(Y_{\cdot s}X_{\cdot s}^\top \circ N_+).\]
Applying Eq.~\eqref{eq:nullHypCoh} and $\eta(Y_{\cdot s}X_{\cdot s}^\top \circ D)\geq \mu m$ yields
$\eta(N_+)\geq \mu m-t$.
The probability that this inequality holds is bounded by Theorem~\ref{thm:minCohTile}.\ \hfill$\blacksquare$
\end{proof}
We assume from now on that $t=0$, by what both definitions of the null hypothesis concur. The following results are though easily adapted to a parametrized definition of the null hypothesis.  
\section{Theoretical Comparison of Proposed Bounds}\label{sec:theoryCompare}
The bounds from the previous section supposedly enable a theoretically well-founded approach to select the rank of a BMF. For any factorization, we can now toss all tiles which may just as well have arisen from noise.
However, the tightness of the bounds is the linchpin of the applicability of this scheme. 

Since we do not require a penalization term of the model complexity to determine the correct rank in the FDR controlled scenario, we can choose the most simple objective function. That is, we regard the following optimization problem for a given rank $r$:
\begin{equation}
\begin{aligned}
\min_{X,Y}L(X,Y)= &|D-Y\odot X^\top|\\
\text{s.t. }& X\in \{0,1\}^{n\times r}, Y\in\{0,1\}^{m\times r}
\end{aligned}
\tag{P}\label{eq:P}
\end{equation}
The minimization of the residual sum of absolute values $L(X,Y)$ is not only simple to implement, but this function is also simple enough to let us derive characteristics of its optima with regard to coherence and minimum density of tiles. This enables a theoretic characterization of those tiles which would be tossed by the bounds. Moreover, this contributes to a fundamental understanding of the nature of tiles in a minimizing BMF. Assuming the data is composed as stated in Eq.~\eqref{eq:BMF0}, we explore the circumstances which have to be met such that a tile in the noise matrix contributes to minimizing $L(X,Y)$.
\begin{lemma}\label{thm:minDensOverlap}
Let $X$ and $Y$ be $n\times r$ and $m\times r$ binary matrices and let $\tilde{s}\in\{1,\ldots,r\}$. If $(X,Y)$ is a solution of \eqref{eq:P}, then the density of tile $(x,y)=(X_{\cdot \tilde{s}},Y_{\cdot \tilde{s}})$ is bounded below on the area which is  not covered by any other tile, i.e.,
\begin{align}\label{eq:densOverlap}
\frac{y^\top (D\circ \thickbar{M})x}{y^\top \thickbar{M}x}
\geq \frac{1}{2},
\end{align}
where $M=\left(\bigvee_{s\neq \tilde{s}}Y_{j s}X_{i s}^\top\right)_{ji}$ denotes the boolean product of the factor matrices, excluding the $\tilde{s}$-th tile.
\end{lemma}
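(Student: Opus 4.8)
The plan is to exploit the global optimality of $(X,Y)$ via a single competitor: the factorization obtained by zeroing out the $\tilde s$-th tile. Since replacing the column $x=X_{\cdot\tilde s}$ by the zero vector keeps $X$ in $\{0,1\}^{n\times r}$, this competitor is feasible for \eqref{eq:P}, so its objective cannot be smaller than $L(X,Y)$. The whole argument then reduces to comparing the residual incurred by the original tile against the residual when the tile is absent, and reading off what optimality forces on the density.

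First I would decompose the Boolean product entrywise as $(Y\odot X^\top)_{ji}=M_{ji}\vee(y_jx_i)$, where $M$ is the Boolean product of all tiles except $\tilde s$, and split the plane into the region $M_{ji}=1$ (already covered by another tile) and the complement region $\thickbar{M}$. On the former the Boolean \texttt{or} evaluates to $1$ regardless of tile $\tilde s$, so the local residual $|D_{ji}-1|$ is identical for $(X,Y)$ and for the zeroed competitor; these entries cancel in the comparison. Hence only $\thickbar{M}$ --- precisely the region appearing in the fraction \eqref{eq:densOverlap} --- can contribute to the difference of objectives.

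Next I would evaluate that difference explicitly on $\thickbar{M}$. Where $y_jx_i=0$ both factorizations produce residual $D_{ji}$ and again cancel, so the only surviving entries form the tile support $S=\{(j,i):\thickbar{M}_{ji}=1,\ y_j=1,\ x_i=1\}$, on which the original residual is $1-D_{ji}$ while the competitor's is $D_{ji}$. Summing gives $L(X,Y)-L(\text{zeroed})=\sum_{(j,i)\in S}(1-2D_{ji})$, and optimality forces this to be $\leq 0$, i.e.\ $\sum_{(j,i)\in S}D_{ji}\geq\tfrac12|S|$. Recognizing $|S|=y^\top\thickbar{M}x$ and $\sum_{(j,i)\in S}D_{ji}=y^\top(D\circ\thickbar{M})x$ and dividing yields \eqref{eq:densOverlap}.

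I expect the main obstacle to be conceptual rather than computational: choosing the right competitor and keeping the region bookkeeping airtight. The crux is the observation that on $\{M_{ji}=1\}$ the tile $\tilde s$ is invisible to the objective, which is exactly what confines the analysis to $\thickbar{M}$ and legitimizes the zero-tile swap as a clean single-tile comparison. One should also note the degenerate case $y^\top\thickbar{M}x=0$, where $S=\varnothing$ and the claimed bound holds vacuously.
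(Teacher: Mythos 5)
Your proposal is correct and follows essentially the same route as the paper: both compare $(X,Y)$ against the competitor whose Boolean product is $M$ (the tile $\tilde s$ zeroed out), observe that the objectives differ only on the tile's support inside $\thickbar{M}$, and use optimality to conclude $2\,y^\top(D\circ\thickbar{M})x \geq y^\top\thickbar{M}x$. The paper phrases the bookkeeping via Frobenius inner products rather than your entrywise case split, but the argument is the same; your explicit remark about the degenerate case $y^\top\thickbar{M}x=0$ is a small point the paper leaves implicit.
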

\begin{proof}
Let $X,\ Y$, $M$ and $(x,y)$ be as described above. The Boolean product  at position $(j,i)$ is
\begin{align*}
(Y\odot X^\top )_{ji}&= \bigvee_{s=1}^rY_{js}X_{js}= M_{ji} + y_jx_i\thickbar{M_{ji}}.
\end{align*}
The approximation error of a Boolean product is the sum of uncovered ones and covered zeros in the data: 
\begin{align*}
&|D-Y\odot X^\top | \\
=\ &\langle D,\thickbar{Y\odot X^\top }\rangle_F +\langle\thickbar{D},Y\odot X^\top \rangle_F\\
=\ &\langle D,\thickbar{M+ yx^\top \circ \thickbar{M}}\rangle_F +\langle\thickbar{D},M+ yx^\top \circ \thickbar{M}\rangle_F\\
=\ &\langle D,\thickbar{M}\rangle_F - y^\top (D\circ \thickbar{M})x +\langle\thickbar{D},M\rangle_F + y^\top (\thickbar{D}\circ \thickbar{M})x\\
=\ &|D-M| - y^\top (D\circ \thickbar{M})x + y^\top (\thickbar{D}\circ \thickbar{M})x.
\end{align*}
Since $(X,Y)$ minimizes $L$, the gap between the function values $|D-M|-L(X,Y)$ is nonnegative. Hence
\begin{align*}
& y^\top (D\circ \thickbar{M})x - y^\top (\thickbar{D}\circ\thickbar{M})x\\
&= 2y^\top (D\circ \thickbar{M})x - y^\top \thickbar{M}x \geq 0.
\end{align*}
Transforming this inequality yields the final result.\hfill$\blacksquare$
\end{proof}
Note that the proof of Lemma~\ref{thm:minDensOverlap} implies, that the density in Eq.~\eqref{eq:densOverlap} has to be larger than one half, if the objective function incorporates a regularization term on the factor matrices. This could be, e.g., the $\ell1$-norm of the matrices. From Lemma~\ref{thm:minDensOverlap} we now conclude the following property of tiles which are a false discovery.  
\begin{corollary}
Let the matrices $X$ and $Y$ solve \eqref{eq:P}, and let $\tilde{s}\in\{1,\ldots,r\}$. If the tile $(X_{\cdot \tilde{s}},Y_{\cdot \tilde{s}})$ is a false discovery and has no overlap with the remaining tiles, i.e., $(Y_{\cdot s}^\top Y_{\cdot \tilde{s}})(X_{\cdot s}^\top X_{\cdot \tilde{s}})=0$ for $s\neq \tilde{s}$, then $(X_{\cdot \tilde{s}},Y_{\cdot \tilde{s}})$ is $\nicefrac{1}{2}$-dense in $N$.
\end{corollary}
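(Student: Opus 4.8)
The plan is to chain together the density lower bound of Lemma~\ref{thm:minDensOverlap} with the two hypotheses of the corollary: the no-overlap assumption will collapse the \emph{conditional} density in Eq.~\eqref{eq:densOverlap} to an unconditional $\nicefrac{1}{2}$-density in $D$, after which the false-discovery assumption lets me replace $D$ by the noise matrix on the relevant support.

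First I would translate the no-overlap hypothesis into a statement about supports. Writing $(x,y)=(X_{\cdot\tilde s},Y_{\cdot\tilde s})$ and recalling $|Y_{\cdot s}X_{\cdot s}^\top\circ yx^\top|=(Y_{\cdot s}^\top y)(X_{\cdot s}^\top x)$, the assumption $(Y_{\cdot s}^\top Y_{\cdot\tilde s})(X_{\cdot s}^\top X_{\cdot\tilde s})=0$ for all $s\neq\tilde s$ says that the rectangle $yx^\top$ is disjoint from every other tile, hence from the support of $M=\left(\bigvee_{s\neq\tilde s}Y_{js}X_{is}^\top\right)_{ji}$. Consequently $\thickbar{M}_{ji}=1$ at every $(j,i)$ with $y_jx_i=1$, so the two quantities in Lemma~\ref{thm:minDensOverlap} simplify to $y^\top(D\circ\thickbar{M})x=y^\top Dx$ and $y^\top\thickbar{M}x=|x||y|$. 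The lemma's conclusion Eq.~\eqref{eq:densOverlap} therefore reads $y^\top Dx\geq\nicefrac{1}{2}\,|x||y|$, i.e.\ the tile is $\nicefrac{1}{2}$-dense in $D$.

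Next I would use the false-discovery assumption (under $t=0$ the null hypothesis asserts zero overlap with the model, $|yx^\top\circ Y^*{X^*}^\top|=0$) to identify $D$ with the positive noise on the support of $yx^\top$. There the model entry $(Y^*\odot{X^*}^\top)_{ji}$ vanishes, so $\left(\thickbar{Y^*\odot{X^*}^\top}\right)_{ji}=1$; substituting into Eq.~\eqref{eq:BMF0} gives $D_{ji}=\bigl(0\vee (N_+)_{ji}\bigr)\wedge\bigl(1\vee(\thickbar{N_-})_{ji}\bigr)=(N_+)_{ji}$. Hence $y^\top Dx=y^\top N_+x$, and the $\nicefrac{1}{2}$-density in $D$ from the previous step becomes $y^\top N_+x\geq\nicefrac{1}{2}\,|x||y|$, which is precisely the assertion that $(x,y)$ is $\nicefrac{1}{2}$-dense in the noise matrix $N$ (the positive noise $N_+$).

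The hard part will be the bookkeeping inside the two-sided noise model of Eq.~\eqref{eq:BMF0}: I must verify that on the model-free region the negative noise $N_-$ is fully masked. This rests on the fact that a vanishing model entry makes the complement $\left(\thickbar{Y^*\odot{X^*}^\top}\right)_{ji}=1$, which renders the second conjunct trivially true and leaves the first conjunct equal to $(N_+)_{ji}$. Outside this region the identity $D=N_+$ need not hold, which is exactly why both the no-overlap hypothesis (for the density-in-$D$ step) and the false-discovery hypothesis (for the $D=N_+$ step) are required. Everything else is direct substitution into the already-proved Lemma~\ref{thm:minDensOverlap}.
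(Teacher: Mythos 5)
Your proof is correct and follows exactly the route the paper intends: the corollary is stated without an explicit proof as an immediate consequence of Lemma~\ref{thm:minDensOverlap}, with the no-overlap hypothesis collapsing Eq.~\eqref{eq:densOverlap} to $\nicefrac{1}{2}$-density in $D$, and the false-discovery hypothesis (zero overlap with $Y^*{X^*}^\top$ at $t=0$) combined with Eq.~\eqref{eq:BMF0} giving $D_{ji}=(N_+)_{ji}$ on the tile's support. Your explicit check that the negative noise $N_-$ is masked where the model vanishes is a detail the paper glosses over, and it is handled correctly.
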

A similar procedure leads to a bound on the coherence.
\begin{lemma} \label{thm:minCohTile}
Let the matrices $X$ and $Y$ solve \eqref{eq:P} and let $s\in\{1,\ldots,r\}$. If $(X_{\cdot s},Y_{\cdot s})$ is $\delta$-dense in $D$, then
\begin{align}
\eta(D) > \delta|Y_{\cdot s}|\frac{\delta|X_{\cdot s}|-1}{|X_{\cdot s}|-1}\label{eq:minCohTile}
\end{align}
\end{lemma}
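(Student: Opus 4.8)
The plan is to connect the coherence $\eta(D)$ to the density of the tile $(X_{\cdot s},Y_{\cdot s})$ by exploiting the fact that a $\delta$-dense tile forces many columns to share many common $1$-entries. First I would set $x=X_{\cdot s}$ and $y=Y_{\cdot s}$ with $|x|=a$ and $|y|=b$, and recall that $\delta$-density in $D$ means $y^\top D x \geq \delta ab$, i.e.\ the restriction of $D$ to the $a$ selected columns and $b$ selected rows contains at least $\delta ab$ ones. The key idea is that $\eta(D)$ is the maximum inner product between any two distinct columns of $D$, so it suffices to find \emph{one} pair of selected columns whose inner product is large, and then bound $\eta(D)$ below by that inner product.

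Next I would estimate the total pairwise column overlap within the tile. Counting within the $b$ selected rows, for each such row $j$ with $y_j=1$ let $c_j=\sum_{i} x_i D_{ji}$ be the number of selected columns that have a $1$ in row $j$; then $\sum_{j} c_j = y^\top D x \geq \delta ab$. The number of ordered pairs of distinct selected columns sharing a $1$ in row $j$ is $c_j(c_j-1)$, so summing over rows gives a lower bound on the total pairwise shared-$1$ count. By convexity (Jensen's inequality applied to $c\mapsto c(c-1)$, with the constraint $\sum_j c_j \geq \delta ab$ over $b$ rows), the total is at least $b\,\bar c(\bar c-1)$ where $\bar c = \delta a$ is the average. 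Dividing this aggregate overlap by the number $a(a-1)$ of ordered distinct column pairs yields, by averaging, a pair of columns whose inner product (restricted to the selected rows, hence a lower bound for the full inner product) is at least
\begin{align*}
\frac{b\,\delta a(\delta a -1)}{a(a-1)} = \delta b\,\frac{\delta a - 1}{a-1}.
\end{align*}
Since $\eta(D)$ is the maximum over all column pairs, it is at least this averaged value, which (after substituting $a=|X_{\cdot s}|$ and $b=|Y_{\cdot s}|$) is exactly the right-hand side of Eq.~\eqref{eq:minCohTile}.

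The main obstacle I anticipate is twofold. First, the convexity step must be handled carefully: the function $c\mapsto c(c-1)$ is convex, so Jensen lowers the sum when mass is spread out, meaning the \emph{minimum} of the pairwise-overlap sum subject to fixed $\sum_j c_j$ is achieved at equal $c_j$; I must verify this gives a valid lower bound rather than an upper bound, and check the integrality of the $c_j$ does not spoil the clean expression. Second, I need the inequality in Eq.~\eqref{eq:minCohTile} to be \emph{strict}, which suggests the averaging argument alone yields ``$\geq$'' and strictness must come from a separate source---most plausibly from invoking Lemma~\ref{thm:minDensOverlap} or the optimality of $(X,Y)$ in \eqref{eq:P} to rule out the boundary case where every column pair achieves exactly the average (e.g.\ the degenerate configuration where $\delta a$ is an integer and all $c_j$ are equal would need to be excluded, or the role of rows outside the tile contributes the extra margin). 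I would look to the solution property of \eqref{eq:P} to supply the strict gap, since a minimizing factorization cannot leave a tile sitting exactly at the averaging bound without a beneficial local move, and that structural fact is what separates ``$\geq$'' from ``$>$''.
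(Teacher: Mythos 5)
Your proof is correct and is essentially the paper's own argument in combinatorial dress: your row-wise Jensen bound $\sum_j c_j(c_j-1)\geq b\,\bar c(\bar c-1)$ is algebraically identical to the paper's Cauchy--Schwarz step $\|\diag(Y_{\cdot s})DX_{\cdot s}\|^2\geq |Y_{\cdot s}^\top DX_{\cdot s}|^2/|Y_{\cdot s}|$ (both reduce to $\sum_j c_j^2\geq(\sum_j c_j)^2/|Y_{\cdot s}|$), and both proofs finish by averaging the aggregate off-diagonal overlap over the $|X_{\cdot s}|(|X_{\cdot s}|-1)$ ordered column pairs and noting that the row-restricted inner product lower-bounds $\langle D_{\cdot i},D_{\cdot k}\rangle$. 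On your anticipated strictness obstacle: the paper does not resolve it either---its proof never uses the optimality hypothesis of \eqref{eq:P} and simply asserts a strict inequality at the pigeonhole step, where the argument actually yields only ``$\geq$'', so your instinct that the averaging alone gives a non-strict bound is accurate.
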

\begin{proof}
Let $X,Y$ and $s$ be described as above. Denote by $\mathcal{K}=\{i\in\mathcal{I}\mid X_{is}=1\}$ the set of all items indicated by $X_{\cdot s}$. Since the $\ell1$-norm is bounded for a vector $x$ with $a$ nonzero entries by $|x|\leq\sqrt{a}\|x\|$ and since $(X_{\cdot s},Y_{\cdot s})$ is $\delta$-dense, it holds that
\begin{align*}
\|\diag(Y_{\cdot s})DX_{\cdot s}\|^2&\geq \frac{|Y_{\cdot s}^\top DX_{\cdot s}|^2}{|Y_{\cdot s}|}\geq \delta^2|Y_{\cdot s}||X_{\cdot s}|^2, 
\end{align*}
where $\diag(y)$ denotes the diagonal matrix whose nonzero entries equal $y$. The norm above is equal to
\begin{align*}
\|\diag(Y_{\cdot s})DX_{\cdot s}\|^2& = X_{\cdot s}^\top D^\top\diag(Y_{\cdot s})DX_{\cdot s}\\
&=\sum_{i,k\in\mathcal{K}} D_{\cdot i}^\top \diag(Y_{\cdot s})D_{\cdot k}\\
&= Y_{\cdot s}^\top DX_{\cdot s}+\sum_{i\neq k\in\mathcal{K}}D_{\cdot i}^\top \diag(Y_{\cdot s})D_{\cdot k}.
\end{align*}
Combining both (in)equalities above yields
\begin{align*}
\sum_{i\neq k\in\mathcal{K}}D_{\cdot i}^\top \diag(Y_{\cdot s})D_{\cdot k}& \geq Y_{\cdot s}^\top DX_{\cdot s}\left(\frac{Y_{\cdot s}^\top DX_{\cdot s}}{|Y_{\cdot s}|}-1\right)\\
&\geq \delta|Y_{\cdot s}||X_{\cdot s}|\left(\delta|X_{\cdot s}|-1\right).
\end{align*}
According to the pigeonhole principle, indices $i\neq k$ exist, $i,k\in\mathcal{K}$ such that
\begin{displaymath}
~\qquad\qquad\quad\langle D_{\cdot i}, D_{\cdot k}\rangle
> \delta|Y_{\cdot s}|\frac{\delta|X_{\cdot s}|-1}{|X_{\cdot s}|-1}.\qquad\qquad\quad\blacksquare
\end{displaymath}
\end{proof}
If we assume that a tile $(X_{\cdot s},Y_{\cdot s})$ is a false discovery from an optimal solution $(X,Y)$ of \eqref{eq:P}, then Bound~\eqref{eq:minCohTile} applies to $N_+$.
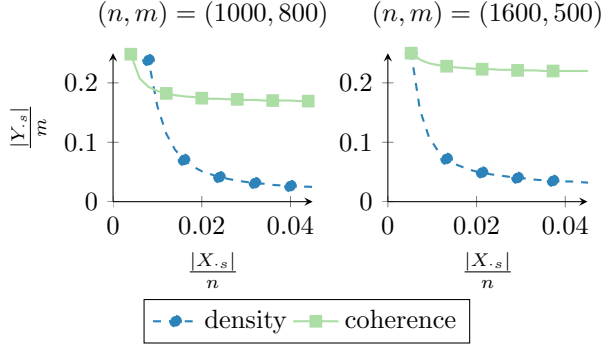
\begin{figure}
\centering
\pgfplotsset{
trustStyle/.style={ctrust,mark options ={ctrust},mark repeat={4}, thick, error bars/.cd,y dir = both, y explicit},
densStyle/.style={cdens,dashed,mark options ={cdens},mark repeat={4}, thick, error bars/.cd,y dir = both, y explicit},
}

\begin{tikzpicture}
    \begin{groupplot}[group style={group size= 2 by 1, vertical sep=0.6cm},
    	height=.42\columnwidth,
    	width=.5\columnwidth,
        axis lines = left,
        scaled ticks=false,tick label style={/pgf/number format/fixed}]
        \nextgroupplot[
        	title={$(n,m)=(1000, 800)$},
        	ylabel={$\frac{|Y_{\cdot s}|}{m}$},
            xlabel={$\frac{|X_{\cdot s}|}{n}$},
            xmin=0,xmax=0.045, ymin=0, ymax=0.25,
            legend columns =-1,
        	legend to name=zelda]
        	\addplot+[densStyle]  plot coordinates {
 (0.002,0.80525)
 (0.004,0.56325)
 (0.006,0.36825)
 (0.008,0.23825)
 (0.01,0.16025) 
 (0.012,0.11425)
 (0.014,0.08725)
 (0.016,0.07025)
 (0.018,0.05925)
 (0.02,0.05125) 
 (0.022,0.04625)
 (0.024,0.04125)
 (0.026,0.03825)
 (0.028,0.03525)
 (0.03,0.03325) 
 (0.032,0.03125)
 (0.034,0.03025)
 (0.036,0.02825)
 (0.038,0.02725)
 (0.04,0.02625) 
 (0.042,0.02525)
 (0.044,0.02525)
 (0.046,0.02425)
 (0.048,0.02325)
 (0.05,0.02325) 
    };
    \addlegendentry{density};
    \addplot+[trustStyle]  plot coordinates {
 (0.002,1)
 (0.004,0.24825)
 (0.006,0.20725)
 (0.008,0.19325)
 (0.01,0.18625) 
 (0.012,0.18225)
 (0.014,0.17925)
 (0.016,0.17725)
 (0.018,0.17625)
 (0.02,0.17425) 
 (0.022,0.17325)
 (0.024,0.17325)
 (0.026,0.17225)
 (0.028,0.17225)
 (0.03,0.17125) 
 (0.032,0.17125)
 (0.034,0.17025)
 (0.036,0.17025)
 (0.038,0.17025)
 (0.04,0.17025) 
 (0.042,0.16925)
 (0.044,0.16925)
 (0.046,0.16925)
 (0.048,0.16925)
 (0.05,0.16925) 
    };\addlegendentry{coherence};
	\nextgroupplot[title={$(n,m)=(1600, 500)$},
    	xmin=0,xmax=0.045, ymin=0, ymax=0.25,
    	xlabel={$\frac{|X_{\cdot s}|}{n}$}]
                \addplot+[densStyle]  plot coordinates {
 (0.00125,0.812)
 (0.00325,0.459)
 (0.00525,0.25) 
 (0.00725,0.154)
 (0.00925,0.109)
 (0.01125,0.086)
 (0.01325,0.072)
 (0.01525,0.063)
 (0.01725,0.057)
 (0.01925,0.052)
 (0.02125,0.049)
 (0.02325,0.046)
 (0.02525,0.044)
 (0.02725,0.042)
 (0.02925,0.04) 
 (0.03125,0.039)
 (0.03325,0.037)
 (0.03525,0.036)
 (0.03725,0.035)
 (0.03925,0.034)
 (0.04125,0.034)
 (0.04325,0.033)
 (0.04525,0.032)
 (0.04725,0.032)
 (0.04925,0.031)
    };
            	\addplot+[trustStyle]  plot coordinates {
 (0.00125,1)
 (0.00325,0.284)
 (0.00525,0.25) 
 (0.00725,0.239)
 (0.00925,0.233)
 (0.01125,0.23) 
 (0.01325,0.228)
 (0.01525,0.226)
 (0.01725,0.225)
 (0.01925,0.224)
 (0.02125,0.223)
 (0.02325,0.223)
 (0.02525,0.222)
 (0.02725,0.222)
 (0.02925,0.221)
 (0.03125,0.221)
 (0.03325,0.221)
 (0.03525,0.22) 
 (0.03725,0.22) 
 (0.03925,0.22) 
 (0.04125,0.22) 
 (0.04325,0.22) 
 (0.04525,0.22) 
 (0.04725,0.219)
 (0.04925,0.219)
    };
    \end{groupplot}
\end{tikzpicture}\\

\pgfplotslegendfromname{zelda}
\caption{Minimum relative size $\nicefrac{|Y_{\cdot s}|}{m}$, depending on $\nicefrac{|X_{\cdot s}|}{n}$, for which the $\mathbb{P}(Z_s=1)\leq 0.01$, based on density (blue) and coherence (green).}
\label{fig:theory}
\end{figure}

These results enable a theoretic comparison of the bounds based on coherence and density. Fig.~\ref{fig:theory} contrasts the two bounds for two settings of dimensions. The plot on the left refers to almost square dimensions $(n,m)=(1000,800)$ and the one on the right to more imbalanced dimensions $(n,m)=(1600,500)$. Let $(X,Y)$ be a solution of $\eqref{eq:P}$ and assume that the positive noise matrix is a Bernoulli matrix with probability $p=0.1$. We plot the minimum relative size $\nicefrac{|Y_{\cdot s}|}{m}$ against the relative size $\nicefrac{|X_{\cdot s}|}{n}$ such that the probability $\mathbb{P}(Z_s=1)\leq 0.01$. The blue curve displays the minimum tile size, assessing the false discovery probability by Corollary~\ref{thm:densZ}, while green refers to Corollary~\ref{thm:cohZ}. Thereby, we assume that the tile is $\nicefrac{1}{2}$-dense in $N_+$ and the value $\eta(N_+)$ is bounded by Inequality~\eqref{eq:minCohTile}. Fig.~\ref{fig:theory} indicates that under the given circumstances the coherence provides a more loose bound than the density. The difference between the required sizes is larger, if the dimensions are disproportionate, which suggests that more tiles are rejected as potential false discoveries by the coherence bound, in particular for wide or tall data matrices.     
\section{Algorithmic Integration of FDR Control}\label{sec:alg}
The false discovery bounds might be applied as a postprocessing step to any BMF. Here, we also establish the use of these bounds to directly estimate the rank.
The framework \textsc{PAL-Tiling}~\cite{Hess/etal/2017a} is well suited for that matter. \textsc{PAL-Tiling} applies recent results from non-convex optimization theory to minimize a relaxed objective function $F$ for matrices with entries between zero and one. A regularizing function $\phi$ penalizes non-binary values in the factor matrices. A rounding procedure at the end of each optimization decides over the estimated factorization rank. In this step, we naturally integrate a check of the provided false discovery bounds. 

The numerical optimization of the relaxed objective is performed by \emph{Proximal Alternating Linearized Minimization} (PALM)~\cite{palm14}. This scheme invokes alternating \emph{proximal mappings} with respect to $\phi$ from the gradient descent update with respect to $F$ (cf.\@ lines~\ref{alg:proxX} and \ref{alg:proxY} in Algorithm~\ref{alg:trustPal}).
Since we intend to solve Problem~\eqref{eq:P}, a suitable smooth relaxed objective is the residual sum of squares $F(X,Y)=\nicefrac{1}{2}\|D-YX^T\|^2$, as discussed in~\cite{Hess/etal/2017a}. The step sizes of the gradient descent updates are computed by the Lipschitz moduli of partial gradients (cf.\@ lines~\ref{alg:stepX} and \ref{alg:stepY}). 
PALM generates a nonincreasing sequence of function values $F(X,Y)+\phi(X)+\phi(Y)$ which converges to a critical point.
\begin{algorithm}[t]
\caption{PAL-Tiling with FDR Control}
\begin{algorithmic}[1]
  \Function{TrustPal}{$D,\hat{p};\Delta_r=10,q = 0.01$}
    \State $(X_K,Y_K)\gets (\emptyset, \emptyset)$
    \For {$r\in\{\Delta_r,2\Delta_r,3\Delta_r,\ldots\}$}
    \State $(X_0,Y_0) \gets $\Call{IncreaseRank}{$X_K, Y_K,\Delta_r$} 
    \For {$k = 0,1,\ldots$}\label{alg:optStart} \Comment{Select stop criterion}\label{alg:optStart}
    	\State $\alpha_k^{-1} \gets \|Y_kY_k^\top \|$\label{alg:stepX}
        \State $X_{k+1} \gets \prox_{\alpha_k\phi}\left(X_k-\alpha_k\nabla_XF(X_k,Y_k)\right)$\label{alg:proxX}
       	\State $\beta_k^{-1} \gets\|X_{k+1}X_{k+1}^\top \|$\label{alg:stepY}
        \State $Y_{k+1} \gets \prox_{\beta_k\phi}\left(Y_k-\beta_k\nabla_YF(X_{k+1},Y_k)\right)$\label{alg:proxY}
    \EndFor\label{alg:optEnd}
    \State $(X,Y)\gets \Call{RoundFDR}{L,X_k,Y_k,\hat{p},q}$ \label{alg:round}
    \If {$\Call{RankGap}{X,Y,r}$}\label{alg:rankStart}
    	\textbf{return} $(X,Y)$
    \EndIf\label{alg:rankEnd}
    \EndFor
  \EndFunction
\end{algorithmic}
\label{alg:trustPal}
\end{algorithm}

Given the data matrix $D$, the estimated noise probability $\hat{p}$, the rank increment $\Delta_r$ (default value 10) and the FDR control level $q$ (default value 0.01), we propose the method \textsc{TrustPal} as sketched in Algorithm~\ref{alg:trustPal}. The rank of the initially empty factor matrices is iteratively incremented. With every rank increment, $\Delta_r$ random columns are appended to the current factor matrices and the numerical optimization by PALM is performed until a selected stop criterion, e.g., maximum number if iterations or minimum function decrease, is met. 

The function $\textsc{RoundFDR}$ (cf.\@ line~\ref{alg:round}) computes for every pair of rounding thresholds $(\tau_x,\tau_y)\in\{0,0.05,0.1,\ldots,1\}$ the candidate binary matrices 
\[X_{is}=\lceil {X_k}_{is}-\tau_x\rceil,\quad Y_{js}=\lceil {Y_k}_{js}-\tau_y\rceil.\]
If the probability of a false discovery is not bounded above by $q$ for tile $(X_{\cdot s},Y_{\cdot s})$, i.e., Corollary~\ref{thm:densZ} or \ref{thm:cohZ} does not yield $\mathbb{P}(Z_s=1)\leq q$, then the tile is removed from the factorization:
$(X_{\cdot s},Y_{\cdot s})=(\mathbf{0},\mathbf{0})$.
Thereby, Corollary~\ref{thm:cohZ} is also applied to the transposed data factorization for a symmetric test of the coherence bound. The function $\textsc{RoundFDR}$ returns the matrices which minimize the residual sum of absolute squares $L(X,Y)$.
Finally, if the gap between the possible rank $r$ and the rank of the returned factorization is larger than a specified value, the current solution is returned.
\section{Experiments}\label{sec:experiments}
Our experimental evaluation serves the assessment of provided bounds in practical applications. Although the theoretical properties of minimizing factorizations yield satisfactory bounds on the size of a tile (cf.\@ Fig.~\ref{fig:theory}), in practice no feasible existing algorithm can guarantee to return optimal solutions of Problem~\eqref{eq:P}. 
In addition, we show that the estimation of the actual noise probability is not critical in practice.

The implementation of \textsc{TrustPal} follows the highly parallel implementation on graphics processing units (GPU) from the framework \textsc{PAL-Tiling}. All experiments are executed on a GPU with 2688 arithmetic cores and 6GiB GDDR5 memory. The source code of \textsc{TrustPal}, together with Julia scripts to generate data and to compare proposed bounds, is provided\footnote{\url{http://sfb876.tu-dortmund.de/trustpal}}.

We compare the two variants of \textsc{TrustPal}, employing the  bounds based on  density or coherence to determine the rank, and the performance of the algorithm \textsc{Primp}. \textsc{Primp} is an instance of \textsc{PAL-Tiling} which minimizes a complicated, highly non-convex and non-smooth function. Yet, this function enables the desirable ability of \textsc{Primp} to correctly estimate the rank and to derive meaningful factorizations. In contrast, the other instance of \textsc{PAL-Tiling}, \textsc{PanPal}, minimizes a simple $\ell1$ regularization of the residual sum of absolute errors. \textsc{PanPal} displays a strong tendency to underestimate the rank. Nevertheless, \textsc{PanPal} is able to yield more accurate results than \textsc{Primp} in some settings~\cite{Hess/etal/2017a}. Whenever that is the case, we display the results for \textsc{PanPal} in the following plots. 
Concerning \textsc{TrustPal}, we employ a stopping criterion of maximum 2000 optimization iterations or a minimum function decrease of 0.0001. 
\subsection{Experiments on Synthetic Data}
We generate $(1600\times 500)$ and $(1000\times 800)$ dimensional datasets according to the scheme established in \cite{Hess/etal/2017a}. Given dimensions $n,m$, rank $r^\star$, maximum relative tile size $d\in[0,1]$ and positive and negative noise parameters $p_\pm$, a factorization of rank $r^\star$ is generated by uniformly randomly drawing each tile $(X^*_{\cdot s},Y^*_{\cdot s})$ from all tiles of size $|X^*_{\cdot s}|\in[0.01n,dn]$ and $|Y^*_{\cdot s}|\in[0.01m,dm]$. Finally, each entry $(Y^*{X^*}^\top)_{ji}=0$ is flipped to one with positive noise probability $p_+$ and every bit $(Y^* {X^*}^\top)_{ji}=1$ is flipped to zero with negative noise probability $p_-$. If not stated otherwise, the default settings $p_+=p_-=0.1$, $r^*=25$ and $d=0.1$ apply. 

We compare the computed models against the planted structure by an adaptation of the micro-averaged $F$-measure.
Details on how to compute the $F$-measure for this kind of data can be found in \cite{Hess/etal/2017a}. 
The $F$-measure obtains values between zero and one. The closer it approaches one, the more similar computed and planted factorizations are.
\begin{figure}[t]
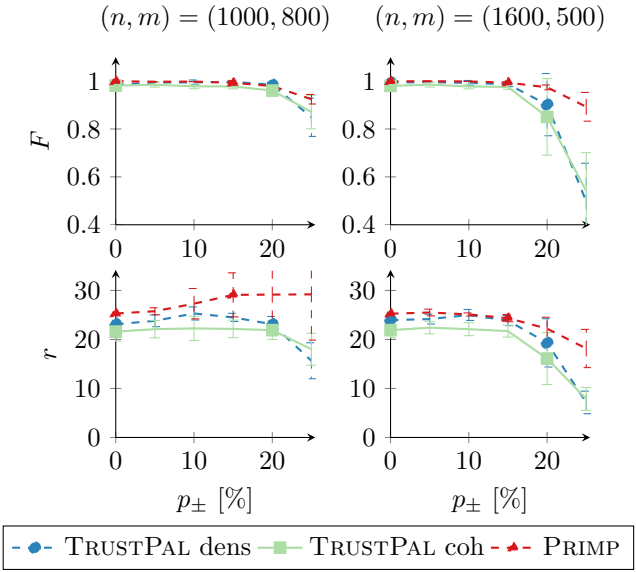

\centering
\include{pics/SynthNoise}
\caption{Variation of uniform noise for $1600\times 500$ and $1000\times 800$ dimensional data. Comparison of $F$-measures (the higher the better) and the estimated rank of the calculated tiling (the closer to 25 the better) for varying levels of noise indicated on the x-axis.}
\label{fig:noise}
\end{figure}

Fig.~\ref{fig:noise} displays the performance of the density and coherence approach of \textsc{TrustPal} with \textsc{Primp}. The noise probability varies between $p_\pm\in\{0,0.05,\ldots,0.25\}$. While the plots on the left show aggregated results over 20 almost square $(1000\times 800)$ matrices, the plots on the right refer to 20 more imbalanced datasets with dimension $(1600\times 500)$. The input parameter of \textsc{TrustPal}, the estimated noise probability, is consistently set to $10\%$. The plots show that both probability bounds yield similar results. In particular, if the noise percentage exceeds the estimated noise probability, no more than the actually planted tiles are discovered. An overestimation of the rank, as happening with \textsc{Primp} on more square matrices, is prevented. 

\begin{table}
	\centering
    \resizebox{\columnwidth}{!}{%
	\begin{tabular}{clrr}\toprule
Vary & Algorithm & $F$ & $r-r^*$  \\ \midrule
\multirow{3}{*}{\rotatebox{90}{$p_+$}  } 
&\textsc{TrustPal} Dens & 0.99 $\pm$ 0.015 & -0.39 $\pm$ 1.65 \\
 & \textsc{TrustPal} Coh & $0.98\pm0.023$ & $-1.77\pm1.57$\\
 & \textsc{Primp} & 0.99 $\pm$ 0.004 & 1.21 $\pm$ 2.89\\
 \midrule
\multirow{3}{*}{\rotatebox{90}{ $r^\star$ }  }  
&\textsc{TrustPal} Dens & 0.98 $\pm$ 0.050 & -0.475 $\pm$ 2.54 \\
 & \textsc{TrustPal} Coh & $0.96\pm0.069$ & $-2.875\pm3.17$\\
 & \textsc{Primp} & 0.98 $\pm$ 0.074 & 0.975 $\pm$ 2.14\\
 \bottomrule
\end{tabular}
}
\caption{Average $F$-measure and difference between computed and planted rank $r-r^\star$ for varied positive noise and planted rank. For each setting the average value is computed over all  dimension variations.}
\label{tbl:avgMeas}
\end{table}

We state the average measures over all variations of the positive noise $p_+\in\{5,10,\ldots,25\}$ and the rank $r^*\in\{5,10,\ldots,45\}$ in Table~\ref{tbl:avgMeas}. For every parameter variation, we generate 8 matrices, 4 for each dimension setting $(n,m)\in\{(1000,800),(1600,500)\}$. We see that all algorithms consistently gain high $F$-values and an average deviation of the rank which is close to zero. Yet, \textsc{Primp}s average rank deviates to a positive amount and \textsc{TrustPal} rather underestimates the rank. Note, that an overestimation of the rank does not necessarily imply a false discovery; planted tiles might be split.
\begin{figure}[t]
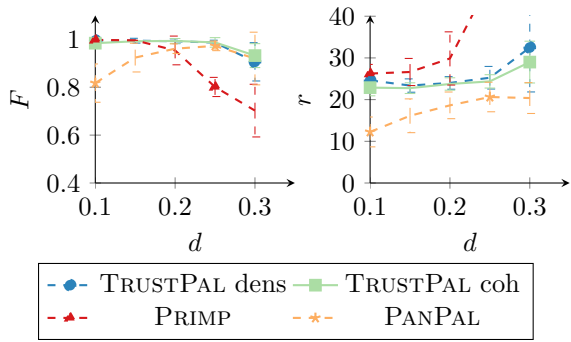

\centering
\include{pics/SynthDens}
\caption{Variation of density and overlap influencing parameter $d\in[0.1,\ldots,0.3]$. Comparison of $F$-measures (the higher the better) and the estimated rank (the closer to 25 the better) for uniform noise of $p=10\%$.}
\label{fig:density}
\end{figure}

In Fig.~\ref{fig:density} we plot the $F$-measure and the computed rank against the parameter $d$, which bounds the maximum size of a tile. Here, we add a comparison to the algorithm \textsc{PanPal}, whose tendency to underestimate the rank comes in handy for more dense matrices, when $d$ is larger than $0.2$.  We see that \textsc{TrustPal} is able to find the right balance and obtains the highest $F$-measure over all variations of $d$.  

In total, the experimental evaluation suggests that the estimation of the noise probability is not critical in practice and that both bounds are suitable to yield accurate rank estimations under false discovery control.
\subsection{MovieLens Experiments}
Comparing the performance of algorithms in terms of the false discovery rate proves difficult for real-world datasets. Yet, recommendation data at least provides information about negative reviews. Here, we explore the performance of algorithms on the binarized MovieLens1M\footnote{\url{http://grouplens.org/datasets/movielens/1m/}} dataset. Setting  $D_{ji}=1$ iff user $j$ recommends movie $i$ with more than three of five stars, we obtain after pruning of less active users and less often reviewed movies a $(3329\times 3015)$ data matrix with density $4.99\%$. We compare the output of \textsc{TrustPal} for various noise probability estimations $\hat{p}$ to the results from \textsc{Primp} and \textsc{PanPal}.
\begin{table}
\centering
\resizebox{\columnwidth}{!}{%
	\begin{tabular}{ccrrrr}\toprule
     Algorithm & Bound& $\hat{p}$& $r$ & $L(X,Y)$ & Wrong rec.\\ \midrule
\textsc{TrustPal} & coh & 0.1	&23	&80.68	&2.43\\
 & & 0.05	&25	&80.52	&2.38\\
 & & 0.01	&25	&80.26	&2.44\\
 & dens & 0.1	&26	&81.59	&2.11\\
 & & 0.05	&35	&79.54	&2.43\\
 & & 0.01	&25	&78.22	&2.72\\
\textsc{Primp} &-&& 78 & 88.59 & 2.38\\
\textsc{PanPal} &-&& 15 & 94.05 & 2.23\\
\bottomrule
\end{tabular}
}
\caption{Comparison of \textsc{TrustPal} for given noise probabilities $\hat{p}$, \textsc{Primp} and \textsc{PanPal} on the MovieLens dataset. Denoted are the rank $r$, approximation error $L$ and the percentage of traceable wrong recommendations, i.e., user-movie recommendations corresponding to bad reviews ($<2.5$ of five stars).}
\label{tbl:movielens}
\end{table}

Table~\ref{tbl:movielens} summarizes the results. We try small noise probabilities $\hat{p}\in\{0.01,0.05,0.1\}$ as we not often expect that users give a positive rating of a movie they do not actually like. We observe again that a variation of the estimated noise probability does not make much of a difference. The rank tends to increase with decreasing estimated noise and consequently does the approximation error decrease and the amount of wrong recommendations increase. The estimated ranks of \textsc{TrustPal} are close to $25$, which differs notably from the rank of $78$ from \textsc{Primp}. Still, \textsc{TrustPal} achieves lowest approximation error.
\section{Related Work}\label{sec:related}
FDR control in unsupervised settings is basically unexplored. One notable approach is scan clustering~\cite{Pacifico/etal/2007a}, focusing on one- or two-dimensional spatial density clustering. The authors control the area of discovered clusters by the FDR, addressing Gaussian processes in continuous data. Thus, this approach cannot be applied to binary or discrete data in general.  

In the pattern mining literature, a standard framework for handling false discoveries is Webb's Significant Pattern Mining \cite{2007Webb}.  It assesses individual patterns, handling the pattern explosion problem by Bonferroni-like corrections on the significance level. The Significant Pattern Mining framework can work with any null hypothesis to be tested on patterns.  

One considerable approach that works in this setting is statistical significant pattern mining via permutation testing (see \cite{Lopez/etal/2015a} and references therein). The major difference to our scenario is the supervision of the mining procedure. More precisely, patterns are annotated by class labels, and the task is to identify those patterns which appear significantly more often in one class than in the other class(es). State-of-the-art approaches rely on (variants of) Westfall-Young permutation based hypothesis testing. 

In a similar line of research, namely statistical emerging pattern mining \cite{Komiyama/etal/2017a}, patterns from different sources (e.g., databases) are considered. The goal is to find patterns which appear significantly more often in one database than in another. Multiple hypothesis testing is applied to control the FDR, and to provide other statistical guarantees. 

A method designed to test one specific null hypothesis on supervised pattern mining results (such as Subgroup Discovery and Exceptional Model Mining) is DFD Validation \cite{2011Duivesteijn}.  In what essentially boils down to a permutation test, a Distribution of artificial False Discoveries (DFD) is generated.  Subgroups resulting from the actual supervised local pattern mining run are then accepted only if they refute the null hypothesis that they are generated by the DFD.  This provides evidence that the subgroups are deemed interesting by more than solely random effects, but the method is specific to the supervised local pattern mining setting.

All approaches make heavy use of the fact that data comes from multiple classes or sources and are not easily transferred to the unsupervised setting. 
\section{Conclusions}\label{sec:conclusion}
We introduce  a method to control the false discovery rate in Boolean matrix factorization and prove two bounds to estimate the probability that a tile minimizes the objective by covering noise.
A theoretical comparison of our bounds characterizes the tiles which are regarded as false discoveries (cf.\@ Fig~\ref{fig:theory}). 
We explain how FDR control can be integrated into existing algorithms---this improves the theoretical properties of algorithms and takes away the need to regularize the model complexity. An empirical study on synthetic and real-world data demonstrates its practical utility.

In conclusion, FDR control takes the concern about too noisy results off the researcher's hand. The remaining question is how to derive tiles which approach the underlying model best, e.g., which are not split? In this respect, the suitable application of regularizers is still important.
Another arising question is if we can incorporate other noise distributions or how we can test if the noise is, e.g., actually Bernoulli distributed. Multiple avenues of research are opened now.
\subsection*{Acknowledgments}
Part of the work on this paper has been supported by Deutsche Forschungsgemeinschaft (DFG) within the Collaborative Research Center SFB 876 ``Providing Information by Resource-Constrained Analysis'', projects C1 and A1
\url{http://sfb876.tu-dortmund.de}.
\bibliographystyle{abbrv}

\begin{thebibliography}{10}

\bibitem{benjamini1995fdr}
Y.~Benjamini and Y.~Hochberg.
\newblock Controlling the false discovery rate: a practical and powerful
  approach to multiple testing.
\newblock {\em Journal of the royal statistical society. Series B
  (Methodological)}, pages 289--300, 1995.

\bibitem{palm14}
J.~Bolte, S.~Sabach, and M.~Teboulle.
\newblock Proximal alternating linearized minimization for nonconvex and
  nonsmooth problems.
\newblock {\em Mathematical Programming}, 146(1-2):459--494, 2014.

\bibitem{2011Duivesteijn}
W.~Duivesteijn and A.~Knobbe.
\newblock Exploiting false discoveries --- statistical validation of patterns
  and quality measures in subgroup discovery.
\newblock In {\em Proc. ICDM}, pages 151--160, 2011.

\bibitem{foucart2013CS}
S.~Foucart and H.~Rauhut.
\newblock {\em A mathematical introduction to compressive sensing}, volume~1.
\newblock Birkh{\"a}user Basel, 2013.

\bibitem{Hess/etal/2017a}
S.~Hess, K.~Morik, and N.~Piatkowski.
\newblock The {PRIMPING} routine - tiling through proximal alternating
  linearized minimization.
\newblock {\em Data Min. Knowl. Discov.}, 31(4):1090--1131, 2017.

\bibitem{Komiyama/etal/2017a}
J.~Komiyama, M.~Ishihata, H.~Arimura, T.~Nishibayashi, and S.~Minato.
\newblock Statistical emerging pattern mining with multiple testing correction.
\newblock In {\em Proc. KDD}, pages 897--906, 2017.

\bibitem{Lopez/etal/2015a}
F.~Llinares{-}L{\'{o}}pez, M.~Sugiyama, L.~Papaxanthos, and K.~M. Borgwardt.
\newblock Fast and memory-efficient significant pattern mining via permutation
  testing.
\newblock In {\em Proc. KDD}, pages 725--734, 2015.

\bibitem{discreteBasisProb}
P.~Miettinen, T.~Mielikainen, A.~Gionis, G.~Das, and H.~Mannila.
\newblock The discrete basis problem.
\newblock {\em TKDE}, 20(10):1348--1362, 2008.

\bibitem{miettinen2011model}
P.~Miettinen and J.~Vreeken.
\newblock Model order selection for boolean matrix factorization.
\newblock In {\em Proc. KDD}, pages 51--59. ACM, 2011.

\bibitem{Pacifico/etal/2007a}
M.~P. Pacifico, C.~Genovese, I.~Verdinelli, and L.~Wasserman.
\newblock Scan clustering: A false discovery approach.
\newblock {\em Journal of Multivariate Analysis}, 98(7):1441 -- 1469, 2007.

\bibitem{wang2013nmfReview}
Y.-X. Wang and Y.-J. Zhang.
\newblock Nonnegative matrix factorization: A comprehensive review.
\newblock {\em TKDE}, 25(6):1336--1353, 2013.

\bibitem{2007Webb}
G.~I. Webb.
\newblock Discovering significant patterns.
\newblock {\em Mach. Learn.}, 68(1):1--33, 2007.

\bibitem{zhangApplication}
Z.~Zhang, C.~Ding, T.~Li, and X.~Zhang.
\newblock Binary matrix factorization with applications.
\newblock In {\em Proc. ICDM}, pages 391--400, 2007.

\end{thebibliography}

\end{document}